\documentclass[conference]{IEEEtran}
\pdfoutput=1 
\IEEEoverridecommandlockouts
\usepackage{lipsum}
\usepackage{cite}
\usepackage{bm}
\usepackage{amsmath,amssymb,amsfonts,amsthm}
\usepackage{environ}
\usepackage{algorithmic}
\usepackage{graphicx}
\usepackage{textcomp}
\usepackage{xcolor}
\usepackage{tabularx}
\usepackage{mathtools}
\usepackage{multirow}
\usepackage[makeroom]{cancel}
\def\BibTeX{{\rm B\kern-.05em{\sc i\kern-.025em b}\kern-.08em
		T\kern-.1667em\lower.7ex\hbox{E}\kern-.125emX}}
\usepackage{etoolbox}
\makeatletter
\patchcmd{\@makecaption}
{\scshape}
{}
{}
{}
\makeatletter
\patchcmd{\@makecaption}
{\\}
{.\ }
{}
{}
\makeatother

\newtheorem{theorem}{Theorem}

\newtheorem{proposition}[theorem]{Proposition}

\DeclareMathOperator*{\argmax}{arg\,max}

\newcommand{\x}{\mathbf{x}}
\newcommand{\Ep}{\mathbb{E}_{p(\mathbf{x})}}
\newcommand{\A}{\mathbf{A}}
\newcommand{\I}{\mathbf{I}}
\newcommand{\W}{\mathbf{W}}

\newcommand{\D}{\mathbf{D}}
\newcommand{\E}{\mathbf{E}}
\newcommand{\F}{\mathbf{F}}
\newcommand{\diag}{{\rm diag}}
\newcommand{\s}{\mathbf{s}}
\newcommand{\bmu}{\bm{\mu}}
\newcommand{\bkappa}{\bm{\kappa}}
\newcommand{\btheta}{\bm{\theta}}
\newcommand{\bphi}{\bm{\phi}}

\newcommand{\bSigma}{\mathbf{\Sigma}}
\newcommand{\z}{\mathbf{z}}
\newcommand{\bb}{\mathbf{b}}
\newcommand{\y}{\mathbf{y}}
\newcommand{\N}{\mathcal{N}}
\newcommand{\C}{\mathbf{C}}
\newcommand{\U}{\mathbf{U}}
\newcommand{\V}{\mathbf{V}}
\newcommand{\B}{\mathbf{B}}
\newcommand{\J}{\mathbf{J}}

\begin{document}

\title{A Closer Look at Disentangling in $\beta$-VAE
	\thanks{*equal contribution, alphabetical order.}
}

\author{\IEEEauthorblockN{Harshvardhan Sikka$^*$\IEEEauthorrefmark{2}, Weishun Zhong$^*$\IEEEauthorrefmark{3}\IEEEauthorrefmark{6}, Jun Yin\IEEEauthorrefmark{4}, Cengiz Pehlevan\IEEEauthorrefmark{2}}\\

	\IEEEauthorblockA{\IEEEauthorrefmark{2}School of Engineering and Applied Sciences, Harvard University,
		Cambridge, MA, USA}
	\newline
	\IEEEauthorblockA{\IEEEauthorrefmark{3}Department of Physics, Massachusetts Institute of Technology, Cambridge, MA, USA}
	\newline
	\IEEEauthorblockA{\IEEEauthorrefmark{6}Center for Brain Sciences, Harvard University, Cambridge, MA, USA}
	\newline
	\IEEEauthorblockA{\IEEEauthorrefmark{4}Department of Physics, Harvard University, Cambridge, MA, USA}}

\maketitle

 \begin{abstract} In many data analysis tasks, it is beneficial to learn representations where each dimension is statistically independent and thus disentangled from the others. If data generating factors are also statistically independent, disentangled representations can be formed by Bayesian inference of latent variables. We examine a generalization of the Variational Autoencoder (VAE), $\beta$-VAE, for learning such representations using variational inference. $\beta$-VAE enforces conditional independence of its bottleneck neurons controlled by its hyperparameter $\beta$. This condition is in general not compatible with the statistical independence of latents. By providing analytical and numerical arguments, we show that this incompatibility leads to a non-monotonic inference performance in $\beta$-VAE with a finite optimal $\beta$.
\end{abstract}

\begin{IEEEkeywords}
	Autoencoder, Bayesian Inference 
\end{IEEEkeywords}

\section{Introduction}

Although data sampled from the natural world appear to be high-dimensional, their variations can usually be explained using a much smaller number of latent factors. Both biological and artificial information processing systems exploit such structure and learn explicit representations that are faithful to data generative factors, known commonly as disentangled representations \cite{bengio2013representation}.  For example, sparse coding, an influential model of the primary visual cortex, proposes that the visual cortex neurons are coding for latent variables of natural scenes: oriented edges \cite{olshausen1997sparse}. A very popular method of extracting latent variables is by using the bottleneck neurons of deep autoencoders \cite{hinton1994autoencoders,alemi2018information}. 
In this paper, we examine unsupervised learning of disentangled representations  in the context of variational inference and a generalization of the Variational Autoencoder (VAE)  \cite{kingma2013auto}, $\beta$-VAE, developed specifically for disentangled representation learning \cite{higgins2017beta}.


We will adopt a probabilistic framework for latent-variable modeling of data \cite{kingma2019introduction}, where a generative model $p_{\btheta}(\x,\z)$ for data $\x$ and latent variables $\z$ is assumed:
\begin{align}
\label{eqn:generative}
p_{\btheta}(\x,\z) = \ p_{\btheta}(\x|\z)p(\z).
\end{align}
Here 
$\btheta$ denotes the parameters of our model, $p_{\btheta}(\x|\z)$ models the stochastic process that generates the data given the latent variables, and $p(\z)$ is the prior on the latent variables. An interpretable and common choice for $p(\z)$, and the subject of our paper, is a factorized distribution $p(\z) = \prod_{i=1}^k p_i(z_i)$, which implies statistical independence. Examples of models with independent priors include popular methods such as Independent Component Analysis \cite{hyvarinen2000independent,khemakhem2019variational} and Principal Component Analysis \cite{tipping1999probabilistic}.


While a common definition of learning disentangled representations has yet to be agreed upon \cite{bengio2013representation,kingma2013auto,locatello2018challenging,higgins2018towards}, extracting statistically independent latent factors is a natural choice \cite{bengio2013representation,hyvarinen2000independent} and is the definition we will adopt. Such a representation is efficient in that it carries no redundant information \cite{dayan2001theoretical}, and at the same time sufficient information to generate the data. 

In our probabilistic framework, the model posterior distribution $p_{\btheta}(\z|\x)$ allows inference of true latent variables. In principle, this could be used to form disentangled representations. However, model posterior is often intractable \cite{kingma2019introduction}, and variational methods are used to estimate it.

We focus on a state-of-the-art variational inference method for learning disentangled representations, $\beta$-VAE \cite{higgins2017beta}. The $\beta$-VAE training objective includes a hyperparameter, $\beta$, encapsulating the original VAE \cite{kingma2013auto} as a special case with choice $\beta = 1$. When $\beta$ is larger than unity, {\it conditional} independence of the learned representations at the bottleneck layer are enforced, corresponding to a conditional independence assumption on data generating latent variables, i.e. $p(\z|\x) = \prod_{i}p_i(z_i|\x)$ \cite{higgins2017beta}. However, as pointed above, a more natural assumption on latents is full statistical independence. Further, statistically independent latents are in general not conditionally independent. Given the popularity of VAEs in representation learning, it is important to understand the role of the $\beta$ hyperparameter in learning disentangled (statistically independent) latent variables.

Our main contributions are as follows:
\begin{enumerate}
	\item We provide general results about variational Bayesian inference in  $\beta$-VAE. Specifically, we prove that the $\beta$-VAE objective is non-increasing with increasing $\beta$, leading to worse reconstruction performance but more conditionally independent  representations.  Further, we argue that  latent variable inference performance generally tends to be non-monotonic in $\beta$.
	\item We introduce an analytically tractable model for $\beta$-VAE, specializing to statistically independent latent generative factors. 
	We analytically calculate the optimality conditions for this model, and numerically find that there is an optimal $\beta$ for the best inference of latent variables.
	\item We test our insights from the general theorems and the analytically tractable model using a realistic $\beta$-VAE architecture, using a synthetic MNIST dataset. Simulations agree well with our theory. 
\end{enumerate}

The rest of this paper is organized as follows. In Section \ref{bVAE}, we provide a review of variational inference and $\beta$-VAE. In Section \ref{general}, we  prove several theorems about variational inference in the context of $\beta$-VAE. In Section \ref{analytical}, we introduce our analytical results. In Section \ref{numerical}, we test our insights from the general theorems and the tractable models using a $\beta$-VAE architecture on a synthetic MNIST dataset. Finally, in Section VI we discuss our results and present our conclusions.

\section{Variational Inference and $\beta$-VAE}\label{bVAE}

Inference of latent variables in probabilistic models is often an intractable calculation \cite{kingma2013auto,kingma2019introduction}. 
%
%
Variational methods instead optimize over a set of tractable distributions, $q_{\bphi}(\z|\x)$, that best approximates $p_{\btheta}(\z|\x)$. We will refer to $q_{\bphi}(\z|\x)$ as the inference model. The difference between the two distributions can be quantified using the Kullback-Leibler (KL) divergence, which we call Model Inference Error (MIE):
\begin{align}
\label{eqn:model inference error}
{\rm MIE} \equiv \mathbb{E}_{p(\mathbf{x})}\left[ D_{KL}(q_{\bphi}({\bf z}|{\bf x}) || p_{\theta}({\bf z}|{\bf x} ) )\right] . 
\end{align}
We distinguish between MIE and the True Inference Error (TIE),
\begin{align}
\label{eqn:true inference error}
{\rm TIE} \equiv \mathbb{E}_{p(\mathbf{x})}\left[ D_{KL}(q_{\bphi}({\bf z}|{\bf x}) || p_{\text{g-t}}({\bf z}|{\bf x} ) )\right],
\end{align}
which can only be known when one has access to the underlying `ground-truth' data generative process and the ground-truth posterior, $p_{\text{g-t}}({\bf z}|{\bf x} )$.

VAEs fit the parameters of the probabilistic model and the variational distribution simultaneously.
A key identity in doing so is \cite{jordan1999introduction} 
\begin{align}
\label{eqn:main}
\ln p_{\btheta}({\bf x}) &- D_{KL}(q_{\bphi}(\z|\x) || p_{\btheta}(\z|\x ) ) \nonumber \\
&= \mathbb{E}_{q_{\bphi}(\z|\x)}\left[\ln p_{\btheta}(\x|\z)\right] -D_{KL}(q_{\bphi}(\z|\x) \| p(\z)).
\end{align}
Model fitting is done by maximizing the data log-likelihood, $\ln p_{\btheta}(\x)$, under model parameters. Because the KL divergence is non-negative, the right hand side of \eqref{eqn:main} serves as a lower bound for $\ln p_{\btheta}(\x)$ and is called the Evidence Lower Bound (ELBO)
\begin{align}\label{cost}
&{\rm ELBO}(\btheta,\bphi) \nonumber \\ &\qquad \equiv \mathbb{E}_{q_{\bphi}(\z | \x)}\left[\log p_{\btheta}(\x | \z)\right]- D_{KL}\left(q_{\bphi}(\z|\x) \| p(\z)\right).
\end{align}
%
%
%
VAE parameterizes the distributions $p_{\btheta}(\x|\z)$ and $q_{\bphi}(\z|\x)$ with neural networks, and maximizes ELBO as a proxy for maximizing the data likelihood.

The neural network realization of the $p_{\btheta}(\x|\z)$ is referred to as a decoder \cite{kingma2013auto}. Once the VAE is trained, the decoder can be used as to generate new samples from the model data distribution \cite{kingma2013auto,doersch2016tutorial}. The term $\mathbb{E}_{q_{\bphi}(\z | \x)}\left[\log p_{\btheta}(\x | \z)\right]$ measures the reconstruction performance of the generative model. We will refer to it as the reconstruction objective.

The neural network realization of the inference model is referred to as an encoder \cite{kingma2013auto}. Its outputs constitute a bottleneck layer and  represent inferred latent variables. Note that the MIE calculated from this representation appears on the left hand side of \eqref{eqn:main}. 

$\beta$-VAE is an extension of the traditional VAE, where an extra, adjustable hyperparameter $\beta$ is placed in the training objective: 
\begin{align}\label{bVAEobj}
\mathcal{L}(\btheta,\bphi;\beta) = \mathbb{E}_{q_{\bphi}(\z | \x)}\left[\log p_{\btheta}(\x | \z)\right]-\beta D_{K L}\left(q_{\bphi}(\z | \x) \| p(\z)\right).
\end{align}
Specifically, when  $\beta = 1$, the  $\beta$-VAE is equivalent to VAE and ${\rm ELBO}(\btheta,\bphi) = \mathcal{L}(\btheta,\bphi;1)$. 

Higher values of $\beta$ emphasizes the KL divergence between the inference model $q_{\bphi}(\z|\x)$ and the independent prior $p(\z)$ in the objective \eqref{bVAEobj}. Smaller values of the KL divergence favor a conditionally independent inference model. This can be used to learn disentangled representations of conditionally independent latent variables, whose probability distributions factorize when conditioned on data \cite{higgins2017beta,burgess2018understanding}.

However, as alluded to in our introduction, in many cases of interest and application \cite{Huang_2018_ECCV,lample2017fader,karras2019style}, latent variables are conditionally dependent while being independent \cite{hyvarinen2000independent},\cite{tipping1999probabilistic}. We will encounter an analytically tractable case in Section \ref{analytical}. In such cases, it is not clear if a $\beta$ different than 1 helps learning a disentangled representation which extracts statistically independent latent factors. Our goal in the remaining of this paper is to examine this case analytically and numerically.

For convenience, we also attach a table of terms and corresponding mathematical expressions used throughout the paper (Table \ref{table:1}).

\begin{table}[h!]
	\centering
	\scalebox{1.02}{
		\def\arraystretch{1.85}
		\begin{tabular}{ |c|c|c| } 
			\hline
			\textbf{Term} & \textbf{Mathematical Expression} \\
			\hline
			Prior & $p(\z)$ \\ 
			\hline
			Model Posterior & $p_{\btheta}(\z|\x)$ \\ 
			\hline
			Ground-Truth Posterior & $p_{\text{g-t}}(\z|\x)$ \\ 
			\hline
			Inference Model & $q_{\bphi}(\z|\x)$ \\ 
			\hline
			Data Log-Likelihood & $\log p_{\btheta}(\x)$ \\ 
			\hline
			Reconstruction Objective & $\mathbb{E}_{q_{\bphi}(\z | \x)}\left[\log p_{\btheta}(\x | \z)\right]$ \\ 
			\hline
			\shortstack{Conditional Independence Loss} & $D_{K L}\left(q_{\bphi}(\z | \x) \| p(\z)\right)$  \\ 
			\hline
			$\rm{MIE}$ & $\mathbb{E}_{p(\mathbf{x})}[ D_{KL}(q_{\bphi}({\bf z}|{\bf x}) || p_{\theta}({\bf z}|{\bf x} ) )]$  \\ 
			\hline
			$\rm{TIE}$ & $\mathbb{E}_{p(\mathbf{x})}[ D_{KL}(q_{\bphi}({\bf z}|{\bf x}) || p_{\text{g-t}}({\bf z}|{\bf x} ) )]$  \\ 
			\hline
			\shortstack{Evidence Lower Bound \\ (ELBO)}  & \shortstack{$\mathbb{E}_{q_{\bphi}(\z | \x)}\left[\log p_{\btheta}(\x | \z)\right]$ \\ $- D_{K L}\left(q_{\bphi}(\z | \x) \| p(\z)\right)$ }\\ 
			\hline
	\end{tabular}}
	\vskip 0.1in
	\caption[Table caption text]{Table of terms and corresponding mathematical expressions.}
	\label{table:1}
\end{table}

\section{How $\beta$ Affects Model Performance and Inference of Latent Variables}\label{general}

In this section, we provide general statements on the effect of the $\beta$ parameter on the representation learning and the generative functions of  $\beta$-VAE. We do this by proving propositions about how various terms in the identity \eqref{eqn:main} change as a function of $\beta$. Our first two propositions imply that increasing $\beta$ worsens the quality of reconstructed samples while improving conditional disentangling. While these points have been shown in simulations \cite{higgins2017beta,burgess2018understanding}, here we provide analytical statements. Our last proposition gives a handle on understanding behavior of MIE through ELBO.


In the following, we will denote optimal parameters of a $\beta$-VAE that maximizes the objective \eqref{bVAEobj} by $\btheta^*$ and $\bphi^*$. They are given as a solution to 
\begin{align}\label{opt}
\frac{\partial \mathcal{L}}{\partial \btheta} = \bm{0}, \qquad \frac{\partial \mathcal{L}}{\partial \bphi} = \bm{0}.
\end{align}
We denote the value of the optimal objective by
\begin{align}
\mathcal{L}^*(\beta) \equiv \mathcal{L}(\btheta^*(\beta),\bphi^*(\beta),\beta),
\end{align}
and the value of ELBO at the optimal point by
\begin{align}
{\rm ELBO}^*(\beta) \equiv {\rm ELBO}  (\btheta^*(\beta),\bphi^*(\beta)).  
\end{align}

Our first proposition concerns the behavior of $\mathcal{L}^*(\beta)$ as a function of $\beta$.

\begin{proposition}\label{prop1}
	The optimal value of the $\beta$-VAE objective, $\mathcal{L^*}(\beta)$, is non-increasing with increasing $\beta$:
	\begin{align}
	\frac{\partial \mathcal{L^*}(\beta) }{\partial \beta} = - D_{K L}\left(q_{\bphi^*}(\mathbf{z} | \mathbf{x}) \| p(\mathbf{z})\right) \leq 0.
	\end{align}
\end{proposition}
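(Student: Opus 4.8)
The plan is to recognize $\mathcal{L}^*(\beta)$ as the optimal value function of a parametric optimization problem and to apply the envelope theorem. The objective \eqref{bVAEobj} depends on $\beta$ in two distinct ways: explicitly, through the linear coefficient multiplying the conditional independence loss, and implicitly, through the optimizers $\btheta^*(\beta)$ and $\bphi^*(\beta)$. The central observation is that the implicit dependence contributes nothing to the derivative at the optimum, so only the explicit dependence survives, and that explicit dependence is precisely the negative KL term.

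Concretely, I would differentiate $\mathcal{L}^*(\beta) = \mathcal{L}(\btheta^*(\beta), \bphi^*(\beta); \beta)$ with respect to $\beta$ by the chain rule:
\begin{align}
\frac{d\mathcal{L}^*}{d\beta} = \frac{\partial \mathcal{L}}{\partial\btheta}\cdot\frac{d\btheta^*}{d\beta} + \frac{\partial\mathcal{L}}{\partial\bphi}\cdot\frac{d\bphi^*}{d\beta} + \frac{\partial\mathcal{L}}{\partial\beta}\bigg|_{\btheta^*,\bphi^*}.
\end{align}
The first two terms vanish by the stationarity conditions \eqref{opt}, which hold because $(\btheta^*(\beta),\bphi^*(\beta))$ is, by definition, an interior maximizer of $\mathcal{L}$ for each fixed $\beta$. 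Since $\beta$ enters \eqref{bVAEobj} linearly as the coefficient of $-D_{KL}(q_{\bphi}(\z|\x)\|p(\z))$, the remaining explicit partial derivative in $\beta$ equals exactly $-D_{KL}(q_{\bphi^*}(\z|\x)\|p(\z))$ evaluated at the optimum. Non-negativity of the KL divergence then delivers the claimed inequality $\partial_\beta \mathcal{L}^* \leq 0$.

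The main obstacle is justifying the interchange rigorously: the chain-rule step presumes that $\btheta^*(\beta)$ and $\bphi^*(\beta)$ are differentiable functions of $\beta$, which in turn requires regularity conditions (for instance, a unique, nondegenerate interior maximizer so that the implicit function theorem guarantees smooth dependence of the optimizers on $\beta$). I would handle this by invoking the envelope theorem directly rather than re-deriving the cancellation by hand, since its hypotheses are exactly what is needed and it isolates the smoothness of the optimizers as the one assumption carrying the analytical weight. I would also remark that even when the maximizer is merely selected rather than unique, a standard envelope argument still reduces the derivative of the value function to the explicit partial derivative in $\beta$, so the conclusion is robust to these technical caveats.
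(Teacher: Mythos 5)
Your proposal is correct and follows essentially the same route as the paper's proof: differentiate $\mathcal{L}^*(\beta)$ by the chain rule, annihilate the implicit terms via the stationarity conditions \eqref{opt}, and conclude from the non-negativity of the KL divergence. Your envelope-theorem framing and the remark on differentiability of the optimizers are just a more careful packaging of the same cancellation argument the paper uses.
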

\begin{proof}
	Follows from an application of the chain rule, the optimality conditions \eqref{opt}, and the nonegativity of the KL-divergence:
	\begin{align}
	\frac{\partial \mathcal{L^*} }{\partial \beta} &= \left.\left(\frac{\partial \mathcal{L} }{\partial \btheta}\cdot \frac{\partial \btheta }{\partial \beta} + \frac{\partial \mathcal{L} }{\partial \bphi}\cdot \frac{\partial \bphi }{\partial \beta} + \frac{\partial \mathcal{L} }{\partial \beta}\right)\right|_{\btheta =\btheta^*, \bphi =\bphi^* } \nonumber \\
	&= -D_{KL}(q_{\bphi^*}({\bf z}|{\bf x}) || p({\bf z})) \leq 0.
	\end{align}
\end{proof}

The next proposition shows how the two terms in $\mathcal{L}^*$ change with $\beta$. 

\begin{proposition}\label{prop2}  The KL divergence between the inference model and the prior is non-increasing with increasing $\beta$:
	\begin{align}\label{eq:qp}
	\frac{d}{d\beta} D_{KL}(q_{\phi^*}({\bf z}|{\bf x}) || p({\bf z})) \leq 0.
	\end{align}
	Together with Proposition \eqref{prop1}, this implies that
	\begin{align}\label{eq:re}
	\frac{d \, \mathbb{E}_{q_{\bphi^*}(\z | \x)}\left[\log p_{\btheta^*}(\x | \z)\right]}{d\beta} \leq 0
	\end{align}
\end{proposition}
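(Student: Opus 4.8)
The plan is to establish the two claims in turn, treating \eqref{eq:qp} as the substantive step and obtaining \eqref{eq:re} from it together with Proposition \ref{prop1}. Throughout, I abbreviate the two pieces of the optimal objective by $R(\beta) \equiv \mathbb{E}_{q_{\bphi^*}(\z|\x)}[\log p_{\btheta^*}(\x|\z)]$ and $K(\beta) \equiv D_{KL}(q_{\bphi^*}(\z|\x)\|p(\z))$, so that $\mathcal{L}^*(\beta) = R(\beta) - \beta K(\beta)$.

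For \eqref{eq:qp} I would avoid differentiating through the optimizers and instead use a rearrangement (exchange) argument. Fix $\beta_1 < \beta_2$ with maximizers $(\btheta_1^*,\bphi_1^*)$ and $(\btheta_2^*,\bphi_2^*)$, and write $R_i, K_i$ for the corresponding values of $R$ and $K$. Optimality of each pair at its own value of $\beta$ gives the two inequalities
\begin{align}
R_1 - \beta_1 K_1 &\geq R_2 - \beta_1 K_2, \\
R_2 - \beta_2 K_2 &\geq R_1 - \beta_2 K_1.
\end{align}
Adding them cancels $R_1$ and $R_2$ and leaves $(\beta_1 - \beta_2)(K_2 - K_1) \geq 0$. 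Since $\beta_1 - \beta_2 < 0$, this forces $K_2 \leq K_1$, i.e. $K(\beta)$ is non-increasing; letting $\beta_2 \to \beta_1$ yields $dK/d\beta \leq 0$, which is \eqref{eq:qp}.

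For \eqref{eq:re} I would combine this with Proposition \ref{prop1}. Differentiating $\mathcal{L}^*(\beta) = R(\beta) - \beta K(\beta)$ directly gives $d\mathcal{L}^*/d\beta = R'(\beta) - K(\beta) - \beta K'(\beta)$, while Proposition \ref{prop1} asserts $d\mathcal{L}^*/d\beta = -K(\beta)$. Equating the two expressions cancels the $-K(\beta)$ term and yields the identity $R'(\beta) = \beta K'(\beta)$. Because $\beta \geq 0$ and $K'(\beta) \leq 0$ by the first part, the right-hand side is non-positive, so $R'(\beta) \leq 0$, which is \eqref{eq:re}.

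The main obstacle is rigor about treating $\btheta^*(\beta)$ and $\bphi^*(\beta)$ as differentiable functions of $\beta$: the chain-rule step underlying Proposition \ref{prop1} and the quantity $K'(\beta)$ both presuppose that the optimizer varies smoothly with $\beta$ and that the global optimum is attained. The rearrangement argument is attractive precisely because it sidesteps this—it needs only existence of the optimizers and delivers monotonicity directly, and a monotone function carries the claimed sign of derivative wherever the derivative exists. In fact the reconstruction claim admits the same assumption-light route: the first inequality above rearranges to $R_1 - R_2 \geq \beta_1(K_1 - K_2) \geq 0$, using $K_1 \geq K_2$ and $\beta_1 \geq 0$, so $R(\beta)$ is itself non-increasing. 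I would therefore present the exchange argument as the backbone and invoke the relation $R'(\beta) = \beta K'(\beta)$ only to match the differential form in which the proposition is stated.
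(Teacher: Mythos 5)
Your proof is correct and is essentially the paper's own argument: the paper's Appendix proves \eqref{eq:qp} via a general lemma for objectives of the form $A(\bkappa)-\beta B(\bkappa)$ using exactly the same two optimality inequalities, merely phrased as a contradiction rather than by adding them directly, and then obtains \eqref{eq:re} from \eqref{eq:qp} and Proposition \ref{prop1} just as you do. Your closing observation that monotonicity of the reconstruction term also follows directly from the exchange inequalities, without any differentiability assumption, is a nice bonus but does not change the substance.
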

\begin{proof}
	See Appendix \ref{pprop2}.
\end{proof}

The next proposition is about the behavior of ${\rm ELBO}^*$.

\begin{proposition}\label{prop3}
	${\rm ELBO}^*$ is maximized at $\beta = 1$. 
	%
\end{proposition}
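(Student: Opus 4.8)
The plan is to exploit the fact that the ELBO functional is literally the $\beta$-VAE objective at $\beta = 1$, i.e. ${\rm ELBO}(\btheta,\bphi) = \mathcal{L}(\btheta,\bphi;1)$. By the definition of the optimal parameters, $(\btheta^*(1),\bphi^*(1))$ are the global maximizers of $\mathcal{L}(\cdot,\cdot;1)$, hence of the ELBO itself. Thus ${\rm ELBO}^*(1) = {\rm ELBO}(\btheta^*(1),\bphi^*(1))$ equals the unconstrained maximum $\max_{\btheta,\bphi}{\rm ELBO}(\btheta,\bphi)$. For any other value of $\beta$, the parameters $(\btheta^*(\beta),\bphi^*(\beta))$ are tuned to maximize the \emph{reweighted} objective $\mathcal{L}(\cdot,\cdot;\beta)$, not the ELBO, so ${\rm ELBO}^*(\beta)$ merely evaluates the fixed ELBO functional at parameters that are generally suboptimal for it.

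With this observation in place, the first step is to record that ${\rm ELBO}^*(\beta)$ is an evaluation of a single, $\beta$-independent functional, and the second step is the one-line inequality
\begin{align}
{\rm ELBO}^*(\beta) = {\rm ELBO}(\btheta^*(\beta),\bphi^*(\beta)) \leq \max_{\btheta,\bphi}{\rm ELBO}(\btheta,\bphi) = {\rm ELBO}^*(1),
\end{align}
which holds for every $\beta$ and immediately establishes that $\beta = 1$ is a maximizer.

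As an independent confirmation that ties the result to the preceding propositions, I would also compute the derivative directly. Writing $D(\beta) \equiv D_{KL}(q_{\bphi^*}(\z|\x)\|p(\z))$ and using ${\rm ELBO}^*(\beta) = \mathcal{L}^*(\beta) + (\beta-1)D(\beta)$ together with Proposition \ref{prop1} (which gives $d\mathcal{L}^*/d\beta = -D(\beta)$), the $-D(\beta)$ term cancels the $+D(\beta)$ produced by the product rule, leaving
\begin{align}
\frac{d}{d\beta}{\rm ELBO}^*(\beta) = (\beta-1)\,\frac{dD}{d\beta}.
\end{align}
Proposition \ref{prop2} gives $dD/d\beta \leq 0$, so this derivative is non-negative for $\beta<1$ and non-positive for $\beta>1$, again pinning the maximum at $\beta=1$ and showing the behavior is unimodal.

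The main obstacle is not computational but a matter of hypotheses: both arguments require that $(\btheta^*(\beta),\bphi^*(\beta))$ be genuine global maximizers of $\mathcal{L}(\cdot,\cdot;\beta)$ rather than arbitrary critical points satisfying \eqref{opt}, and the derivative version additionally needs $D(\beta)$ to be differentiable in $\beta$ (an envelope/regularity condition already tacitly invoked in Propositions \ref{prop1} and \ref{prop2}). Under these standing assumptions the statement is immediate; I would flag the global-maximizer reading explicitly so that the key inequality $\leq \max_{\btheta,\bphi}{\rm ELBO}$ is fully justified, and I would prefer the variational argument as the primary proof since it avoids any differentiability assumption.
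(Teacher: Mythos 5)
Your proposal is correct, but your primary argument is genuinely different from the paper's proof. The paper proves the proposition exactly the way you do in your ``independent confirmation'': it writes $\mathcal{L} = {\rm ELBO} + (1-\beta)D_{KL}(q_{\bphi}(\z|\x)\|p(\z))$, evaluates at the optimum, invokes Proposition~\ref{prop1} so that the two $D_{KL}$ terms cancel, obtains
\begin{align}
\frac{d\,{\rm ELBO}^*(\beta)}{d\beta} = (\beta-1)\,\frac{d}{d\beta}D_{KL}\left(q_{\bphi^*}(\z|\x)\,\|\,p(\z)\right),
\end{align}
and concludes via Proposition~\ref{prop2}. Your variational argument --- that ${\rm ELBO}$ coincides with $\mathcal{L}(\cdot,\cdot;1)$, so ${\rm ELBO}^*(\beta)$ evaluates a fixed, $\beta$-independent functional at parameters tuned for a different objective and hence can never exceed ${\rm ELBO}^*(1) = \max_{\btheta,\bphi}{\rm ELBO}(\btheta,\bphi)$ --- does not appear in the paper. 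It is more elementary and more robust: it needs no differentiability of $\beta \mapsto (\btheta^*(\beta),\bphi^*(\beta))$ or of $D_{KL}$ along the optimal path, assumptions the paper's envelope-style computation tacitly requires. What the paper's derivative route buys in exchange is sign information on both sides of $\beta=1$ (nondecreasing for $\beta<1$, nonincreasing for $\beta>1$), i.e.\ the unimodal picture that the paper later leans on when discussing the non-monotonicity of MIE via \eqref{MIEELBO}; your one-line inequality gives maximality at $\beta=1$ but says nothing about the shape of ${\rm ELBO}^*$ elsewhere. Your caveat about global versus merely critical points in \eqref{opt} is well taken and applies to the paper too: the appendix proof of Proposition~\ref{prop2} defines $\bkappa^*(\beta)$ as an $\argmax$, so the paper is already implicitly working with global maximizers, and under that reading both of your arguments are sound.
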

\begin{proof} Note that by definition 
	\begin{align}\label{lelbo}
	\mathcal{L} = {\rm ELBO} + (1-\beta)D_{KL}(q_{\bphi}(\z|\x) || p({\bf z})).
	\end{align}
	By evaluating \eqref{lelbo} at $\btheta=\btheta^*$ and $\bphi=\bphi^*$, and the chain rule, we get:
	\begin{align}
	\frac{d\,{\rm ELBO^*(\beta)}}{d\beta}&= \frac{d}{d\beta} \left[\mathcal{L}^* -(1-\beta)D_{KL}(q_{\bphi^*}(\z|\x) || p({\bf z}))\right] \nonumber \\
	&= (\beta-1) \frac{d}{d\beta}D_{KL}(q_{\bphi^*}({\bf z}|{\bf x}) || p({\bf z})).
	\end{align}
	The proposition follows from this result and \eqref{eq:qp}. 
\end{proof}

For simplicity of notation, we presented most of our formulas and propositions for a single data point. All our results generalize to the case where one averages over the data distribution $p(\x)$, or a finite training set. 

Inference of latent variables, measured by MIE, is affected by $\beta$ as well. In the $\beta \to \infty$ limit the inference model becomes more and more conditionally independent, deviating from the model posterior. Is the behavior monotonic? While MIE is not explicitly calculable, we can get a hint of its behavior by rearranging \eqref{eqn:main}, and evaluating it at the optimal $\beta$-VAE parameters:
\begin{align}\label{MIEELBO}
{\rm MIE}(\beta) = \Ep\left[\ln p_{\btheta^*(\beta)}(\x)-{\rm ELBO}^*(\beta)\right].
\end{align}
As reconstruction performance worsens with $\beta$, it is reasonable to expect that the data likelihood decreases with $\beta$. Because ELBO is non-monotonic with a maximum, even if the data log-likehood was monotonic with $\beta$, we can expect a non-monotonic behavior of MIE with an optimal value.  In the next section, we will see two specific examples of this.

\section{Analytical Results}\label{analytical}

In this section we demonstrate our general theory for two different analytically tractable cases. 

\subsection{$\beta$-VAE with a fixed decoder does not lead to better disentangling}
A simple case is when the decoder of the $\beta$-VAE is not trained. In our notation, this amounts to $\btheta$ being fixed. Then the $\beta$-VAE objective \eqref{bVAEobj} only trains the encoder network and the inference model, $q_{\bphi}(\z\|\x)$. We can deduce the behavior of MIE as a function of $\beta$ from \eqref{MIEELBO}. The data likelihood, $p_{\btheta}(\x)$, does not change as a function of training. ${\rm ELBO}^*$ is maximized at $\beta = 1$ from Proposition \ref{prop3}, which can be seen to apply to fixed $\btheta$. This means MIE is minimum at $\beta = 1$. In this case, $\beta=1$, or the original VAE is best at learning the true latent variables.

\subsection{Optimal $\beta$ values in an analytically tractable model}

Next, we present a tractable VAE model, in which we can explicitly calculate the $\beta$-dependence in every term in eq. \eqref{eqn:main}. 

We assume that our data $\x$ comes from mixing of ground truth latent variables (or sources) $\s \in \mathbb{R}^{k}$ through a mixing matrix $\A \in \mathbb{R}^{N \times k}$, then corrupted by noise $\bm{\eta} \in \mathbb{R}^{N}$, 
\begin{equation}
\label{source}
\mathbf{x} = \mathbf{A} \mathbf{s} + \bm{\eta}.
\end{equation}
We assume $\s \sim \N(\bm{0},\I_k)$, $ \bm{\eta} \sim \N(\bm{0},\I_N)$. The data distribution is found to be,
\begin{equation}
\label{eqn:data}
p(\x) = \N(\bm{0},\A\A^{\top}+\I_N) \equiv \N(\bm{0},\bSigma_{\x}).
\end{equation}
We denote a $d \times d$ identity matrix as $\I_d$. In this model we can calculate the ground-truth posterior exactly (see Appendix \ref{app:gtposterior} for details): 
\begin{align}\label{eqn:true posterior}
p_{\text{g-t}}(\s |\x) &= \mathcal{N}(\bmu_{\s|\x}, \bSigma_{\s|\x}), \nonumber \\
\text{with}\;\;  \bmu_{\s|\x} &= (\A^{\top}\A + \mathbf{I}_k)^{-1}\A^{\top}\x \nonumber \\
\text{and}\;\;  \bSigma_{\s|\x} &= (\A^{\top}\A + \mathbf{I}_k)^{-1}.
\end{align}
Note that the covariance matrix of the posterior is non-diagonal. Even though the latent factors are statistically independent, when conditioned on data they are dependent. Therefore, we expect a non-trivial dependence of MIE and TIE on the hyperparameter $\beta$.


Our encoder $q_{\bphi}(\z|\x)$ contains a fully-connected layer $\{\W^{\mu},\bb^{\mu}\}$ with linear activation that codes for the mean $\bmu_{\z}$ of the latent variables $\z$, and a fully-connected layer $\{\W^{\sigma},\bb^{\sigma}\}$ with exponential activation that codes for the diagonal part of the covariance matrix $\bSigma_{\z}$. Given an input $\x \in \mathbb{R}^N$, we generate latent variables $\z\sim \N(\bmu_{\z},\bSigma_{\z}) \in \mathbb{R}^k$ by
\begin{align}
\label{eqn:musigma}
\bmu_{\z} = \W^{\mu} \x + \bb^\mu, \quad \bSigma_{\z} = \diag ( \exp(\W^{\sigma} \x + \bb^\sigma)),
\end{align}
where the $\diag$ operation maps vectors in $\mathbb{R}^{k}$ to the diagonal of a diagonals matrix in $\mathbb{R}^{k\times k}$. The exponential nonlinearity in the definition of the covariance matrix acts elementwise and prevents negative covariances.

Our decoder consists of a single fully-connected layer $\{\D,\bb^{D}\}$ with linear activations. We assume the output $\y \in \mathbb{R}^N$ is normally distributed, $\y \sim \N (\D \z+\bb^D,\sigma_y^2 \I_N)$, where $\sigma_y^2$ is a hyperparameter. Without loss of generality, from now on we choose $\sigma^2_y =1$. 

The decoder defines  $p_{\btheta}(\x | \z)$. The full data likelihood can be calculated using the prior $p(\z) = \mathcal{N}(\bm{0},\I_k)$ through $p_{\btheta}(\x) = \int d\z\, p_{\btheta}(\x | \z)p(\z)$. With this setup, our decoder is fully capable of modeling the data generative process \eqref{eqn:data}, by choosing $\D = \A$, $\bb^D = \bm{0}$ and $\sigma^2_y = 1$. Any deviation from these parameters will be due to the encoder, or the inference model, deviating from the ground-truth distribution.

In order to solve this model, we integrate out data (i.e., performing $\mathbb{E}_{\x\sim p(\x)}[..]$, using eq. \eqref{eqn:data}) in the $\beta$-VAE objective in eq. \eqref{cost} to arrive at (see Appendix \ref{app:intoutdata} for details)
\begin{align}
\label{eqn:objective_nox}
\mathcal{L}_{\beta} = &-\frac{1}{2} \bigg\{ \text{Tr} \bigg[ (\D \W^{\mu} - \mathbf{I}_N) \bSigma_{\x} (\D \W^{\mu} - \mathbf{I}_N)^{\top} \bigg] \nonumber \\
&+ \beta \text{Tr} \bigg[ \W^{\mu} \bSigma_{\x} (\W^{\mu})^{\top} \bigg] \nonumber \\
&+ \sum_i^k \left(\left[\D^{\top}\D\right]_{ii} + \beta\right) e^ {\frac{1}{2}\left[\W^{\sigma}\bSigma_{\x} (\W^{\sigma})^{\top}\right]_{ii} + b_i^{\sigma}} \nonumber \\
&+ (\D \bb^{\mu} + \bb^D)^2 + \beta (\bb^{\mu})^2 - \beta \sum_i^k b^{\sigma}_i  \bigg\}.
\end{align}
We optimize over the network parameters, which amounts to setting the partial derivative of $\mathcal{L}_{\beta}$ with respect to $\{\W^{\mu},\bb^{\mu}, \W^{\sigma},\bb^{\sigma}, \D,\bb^{D}\}$ to zero. Upon simplifying, we find (see Appendix \ref{app:derivatives} for details)
\begin{equation}
\label{eqn:bmubD}
\bb^{\mu} = \bb^{D}=0, 
\end{equation}{}
and the remaining equations are ($a=1,...,N;\; b=1,...,k$):
\begin{align}\label{eqn:equations}
0 &= \left[\left( \D^{\top} (\D \W^{\mu} - \mathbf{I}_N) + \beta \W^{\mu} \right)\bSigma_{\x} \right]_{ab}, \nonumber \\
0 &= \left[ (\D \W^{\mu} - \I_N) \bSigma_{\x} (\W^{\mu})^{\top} \right]_{ab} + D_{ab} e^{ \frac{1}{2}\left[\W^{\sigma}\bSigma_{\x} (\W^{\sigma})^{\top}\right]_{bb} + b^{\sigma}_b }, \nonumber \\
\hspace{-1cm} 0 &= \left(\left[\D^{\top}\D\right]_{aa} + \beta \right) e^{ \frac{1}{2}\left[\W^{\sigma}\bSigma_{\x}\right]_{ab}W^{\sigma}_{ab} + b^{\sigma}_a} \left[\W^{\sigma}\bSigma_{\x}\right]_{ab}, \nonumber \\
0 &= \left( \left[\D^{\top}\D\right]_{aa} + \beta \right) e^{  \frac{1}{2}\left[\W^{\sigma}\bSigma_{\x} (\W^{\sigma})^{\top}\right]_{aa} + b^{\sigma}_a } - \beta.
\end{align}
We can calculate the model posterior distribution $p_{\theta}(\z |\mathbf{x})$ at the network optimum, eqs. \eqref{eqn:bmubD} and \eqref{eqn:equations}. Using Bayes' rule we find (see Appendix \ref{app:modelposterior})
\begin{align}\label{eqn:model posterior}
p_{\theta}(\z |\x) &= \mathcal{N}(\bmu_{\z|\x}, \bSigma_{\z|\x}), \nonumber \\
\text{with}\;\;  \bmu_{\z|\x} &= (\D^{\top}\D + \I_k)^{-1}\D^{\top}\x  \nonumber \\
\text{and}\;\;  \bSigma_{\z|\x} &= (\D^{\top}\D + \I_k)^{-1}.
\end{align}
Note that when $\A = \D$, eq. \eqref{eqn:model posterior} reduces to  eq. \eqref{eqn:true posterior}, and the model posterior matches with the ground-truth posterior.
We are interested in the inference errors MIE and TIE, eqs. \eqref{eqn:model inference error} and \eqref{eqn:true inference error}.
Upon integrating out the data, we find (see Appendix \ref{app:modelposterior} for derivations)
\begin{align}
\label{eqn:mie/tie}
&{\rm MIE/TIE} \nonumber \\
= &\frac{1}{2} \bigg\{ \sum_i^k E_{ii}^{-1}\exp\bigg[ \frac{1}{2}\bigg(\W^{\sigma}\bSigma_{\x}(\W^{\sigma})^{\top} \bigg)_{ii} + b^{\sigma}_i \bigg] - \sum_i^k b^{\sigma}_i \nonumber \\
& + \text{Tr}\log \mathbf{E} + \text{Tr}\bigg[(\mathbf{F} - \W^{\mu})^{\top}\mathbf{E}^{-1}(\mathbf{F} - \W^{\mu})\bSigma_{\x} \bigg] -k \bigg\},
\end{align}
where for MIE
\begin{align}
\E = (\D^{\top}\D+\I_k)^{-1}, \quad
\F = (\D^{\top}\D+\I_k)^{-1}\D^{\top},
\end{align}
and for TIE
\begin{align}
\E = (\A^{\top}\A + \mathbf{I}_k)^{-1},\quad
\F = (\A^{\top}\A + \mathbf{I}_k)^{-1}\A^{\top}.
\end{align}
%

%
\begin{figure}[t]
	\centering
	\includegraphics[width=9cm]{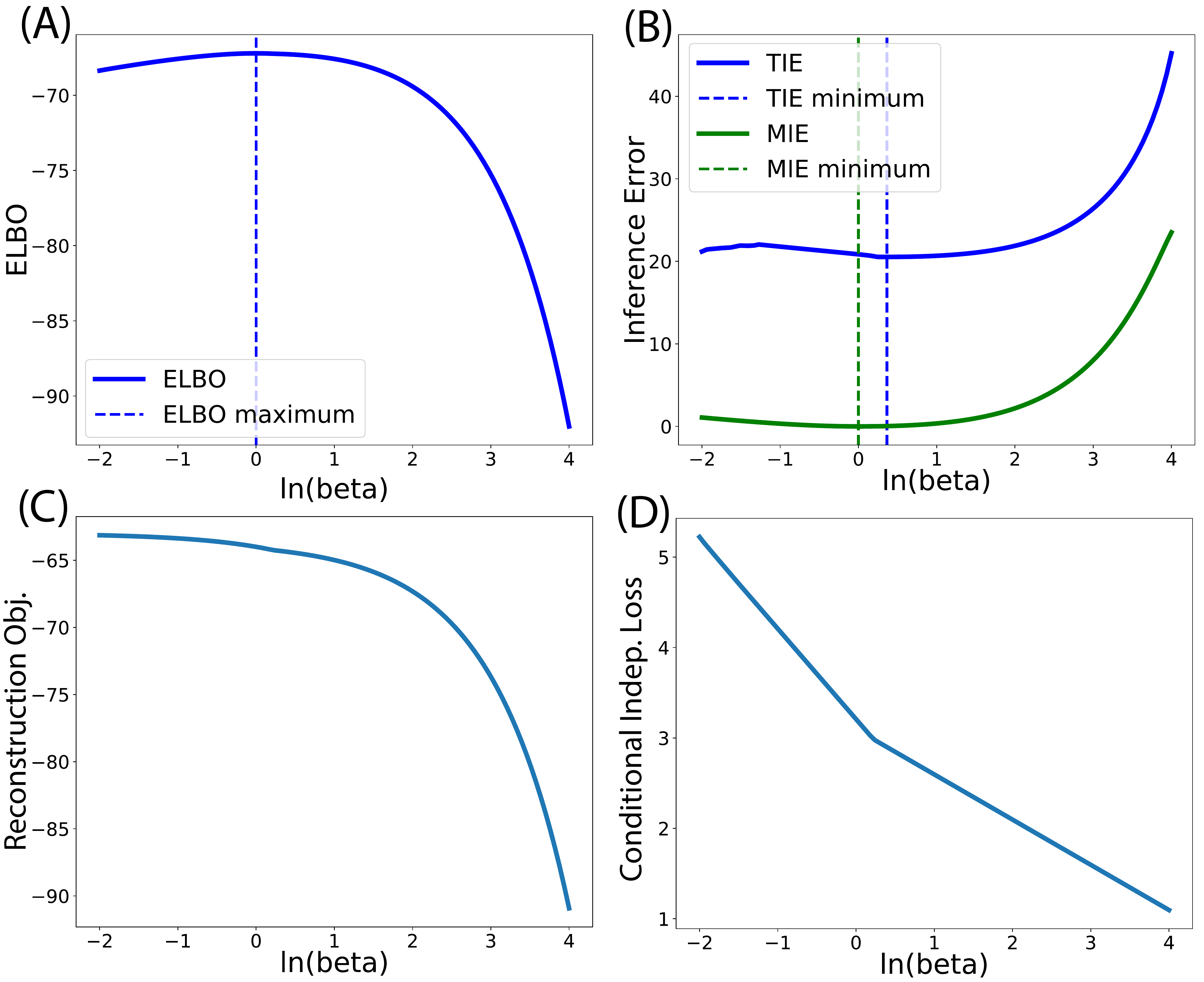}
	\caption{$\beta$-dependence of various quantities at the optimal parameter configuration of $\beta$-VAE. (A) ELBO as a function of $\beta$. (B) MIE/TIE as a function of $\beta$. (C) Reconstruction objective as a function of $\beta$. (D) Conditional Independence Loss as a function of $\beta$. In these plots, we averaged the plotted quantities over the data distribution.}
	\label{fig:panel}
\end{figure}

As an example, we numerically solve eq. \eqref{eqn:equations} for $N=128, k=2$, $A_{ij}=1/2(1+\delta_{ij})$, and use the optimal network parameters $\{\W^{\mu*},\bb^{\mu*}, \W^{\sigma*},\bb^{\sigma*}, \D^*,\bb^{D*}\}$ to calculate ELBO (Fig. \ref{fig:panel}(A)) and inference errors (Fig. \ref{fig:panel}(B)). We  see that ELBO is maximized at $\beta = 1$, while the inference error is not monotonically decreasing and has a minimum at some $\beta$. This confirms the theory we outlined earlier. Also, data log-likelihood is monotonically decreasing with $\beta$ (not shown). We further calculate individual terms in the ELBO: the reconstruction objective (Fig. \ref{fig:panel}(C)), $\mathbb{E}_{q_{\bphi}(\z | \x)}\left[\log p_{\btheta}(\x | \z)\right]$, and the conditional Independence Loss (Fig. \ref{fig:panel}(D)), $D_{KL}\left(q_{\bphi}(\z|\x) \| p(\z)\right)$. Indeed both terms are monotonically decreasing with $\beta$, confirming our propositions.

\section{Numerical Simulations}\label{numerical}

In this section, we examine a deep, nonlinear $\beta$-VAE on a synthetic dataset. The dataset is generated according to eq. \eqref{source} by mixing 10 MNIST digits, arranged as columns of a matrix $\A$, with ground truth sources, $\s \sim \N(\bm{0},\I_k)$, and subsequently adding a noise $\bm{\eta} \sim \N(\bm{0},\I_N)$. Other experimental setups and corresponding datasets that were explored are included in Appendix \ref{app_sims} (Fig. \ref{fig:panel3}).


The encoder, $q_{\bphi}(\z|\x)$, consists of three feed-forward fully-connected layers with tanh activations, ending in two separate output layers encoding the mean of the latent variables $\z$, $\bmu_{\z}$, and the variance, $\bSigma_{\z}$. These are each parameterized by $k$ encoding units. The decoder, $p_{\btheta}(\x | \z)$, consists of three feed-forward fully-connected layers with tanh activation functions, which takes its input from the encoder, and outputs the reconstructed image. Model details are included in Appendix C.  


After training, we  calculate individual terms in the $\beta$-VAE objective and demonstrate their dependence on  $\beta$. These terms correspond to the Reconstruction Objective, (Fig. \ref{fig:panel2}(C)), and the conditional Independence Loss, (Fig. \ref{fig:panel2}(D)). As we observed in the analytically tractable case, and predicted by our theory, these terms are decreasing with $\beta$. Correspondingly, after being maximized around $\beta = 1$ the entire ELBO term decreases with $\beta$ (Fig. \ref{fig:panel2}(A)). We also calculate the TIE for the $\beta$-VAE at various $\beta$, which follows a non-monotonic trend and has an optimal $\beta$ (Fig. \ref{fig:panel2}(B)).

\begin{figure}
	\centering
	\includegraphics[width=9cm]{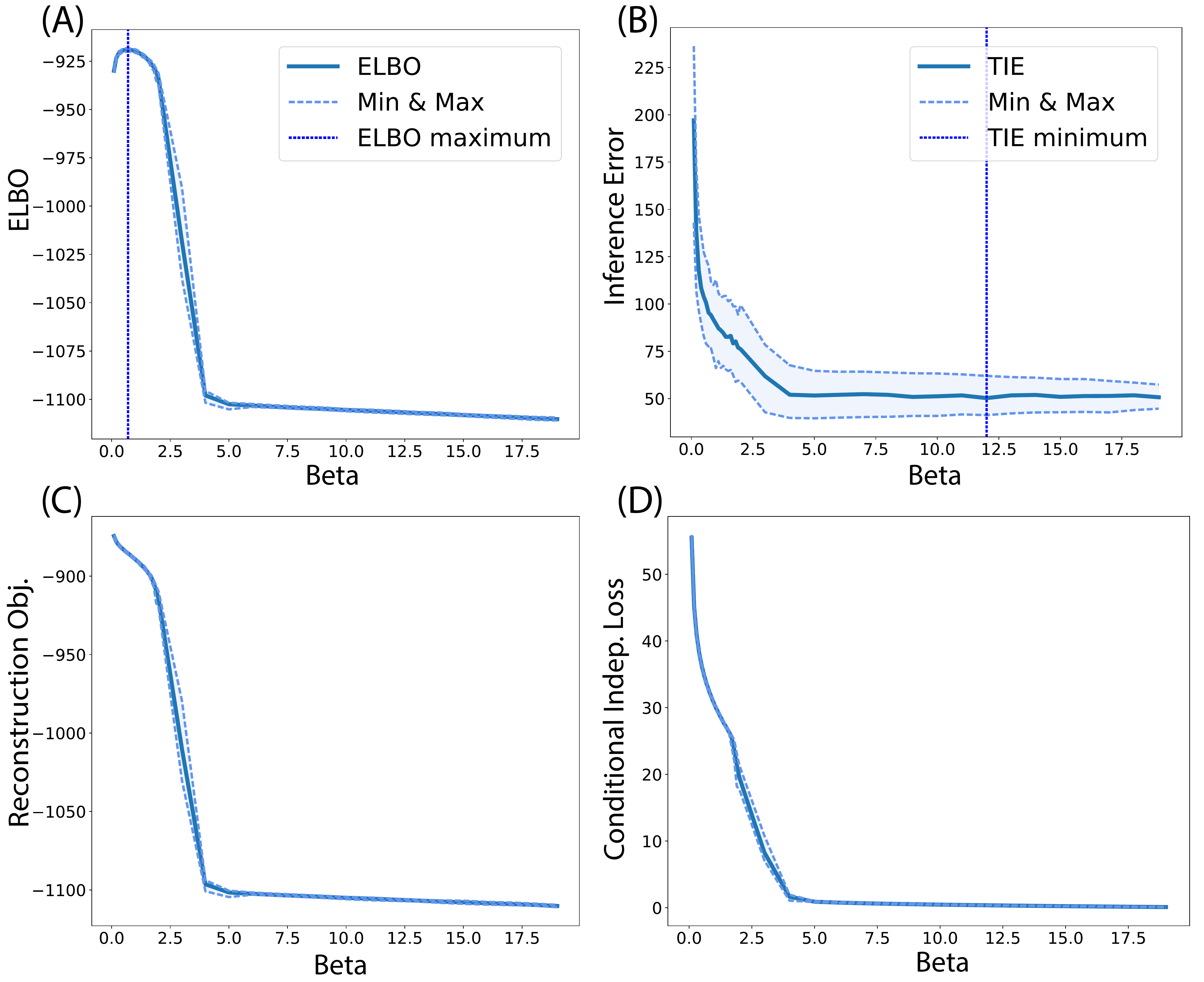}
	\caption{Values for error terms across 100 random initializations of the network. Solid line represents the average. Dashed lines around the solid line represent the minimum and maximum values, and vertical dashed line represent the extremum. (A) ELBO as a function of $\beta$. (B) TIE as a function of $\beta$. (C) Reconstruction Objective as a function of $\beta$. (D) Conditional Independence Loss as a function of $\beta$.  }
	\label{fig:panel2}
\end{figure}

\section{Discussion and Conclusion}\label{discussion}

In this paper, we examined the learning of disentangled representations by extracting statistically independent latent variables in $\beta$-VAE. We proved general theorems on variational Bayesian inference in the context of $\beta$-VAE and introduced an analytically tractable $\beta$-VAE model. We also performed experiments on synthetic datasets to test our insights from the general theorems and the tractable model, and found good agreements.

$\beta$-VAE enforces conditional independence of its units at the bottleneck layer. This preference is not compatible with independence of latent variables, and therefore may lead to an optimal value of $\beta$ for latent variable inference.


There are other perspectives on what constitutes a disentangled representation not addressed in this paper\cite{bengio2013representation,burgess2018understanding}, including definitions not statistical in nature, instead taking into account the manifold structure and symmetry transformations in data  \cite{bengio2013representation,dicarlo2007untangling,higgins2018towards}. Other deep learning approaches to disentangling include the adversarial setting \cite{denton2017unsupervised, tran2017disentangled,john2018disentangled}. Disentangled representations have also been studied in supervised and semi-supervised contexts \cite{siddharth2017learning}.

\appendices
	\section{Proof of Proposition \ref{prop2}}\label{pprop2}
	
	We prove a more general version of eq. \eqref{eq:qp} given in Prop. \ref{prop2}. Eq. \eqref{eq:re} follows from eq. \eqref{eq:qp} and Prop. \ref{prop1}.
	
	\begin{proposition}
		Consider an objective function given by a sum of two terms,
		\begin{align}\label{Odef}
		O(\bkappa;\beta) = A(\bkappa) - \beta B(\bkappa),
		\end{align}
		to be maximized over parameters $\bkappa$, and $\beta$ is a hyperparameter. Let $\bkappa^*(\beta) = \underset{\bkappa}{\argmax} \, O(\bkappa,\beta)$. As $\beta$ increases $B(\kappa^*(\beta))$ is nonincreasing.
	\end{proposition}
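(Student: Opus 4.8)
The plan is to avoid any appeal to differentiability or the envelope theorem and instead give a two-point ``revealed-preference'' comparison, which works even when $\bkappa^*(\beta)$ is only guaranteed to exist (not to be unique or smooth). Fix two hyperparameter values $\beta_1 < \beta_2$ and abbreviate the corresponding maximizers by $\bkappa_1 = \bkappa^*(\beta_1)$ and $\bkappa_2 = \bkappa^*(\beta_2)$. The goal is to conclude $B(\bkappa_1) \ge B(\bkappa_2)$.

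First I would write down the two optimality inequalities that follow directly from the definition of the argmax. Since $\bkappa_1$ maximizes $O(\cdot;\beta_1)$ it beats $\bkappa_2$ at $\beta_1$, and since $\bkappa_2$ maximizes $O(\cdot;\beta_2)$ it beats $\bkappa_1$ at $\beta_2$:
\begin{align}
A(\bkappa_1) - \beta_1 B(\bkappa_1) &\ge A(\bkappa_2) - \beta_1 B(\bkappa_2), \nonumber \\
A(\bkappa_2) - \beta_2 B(\bkappa_2) &\ge A(\bkappa_1) - \beta_2 B(\bkappa_1). \nonumber
\end{align}
The crucial structural feature of the objective \eqref{Odef} is that $A(\bkappa)$ enters with a coefficient independent of $\beta$, so upon adding the two inequalities all four $A$-terms cancel, leaving a relation purely in the $B$-values. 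Collecting terms yields
\begin{align}
(\beta_2 - \beta_1)\,\big[\,B(\bkappa_1) - B(\bkappa_2)\,\big] \ge 0, \nonumber
\end{align}
and since $\beta_2 - \beta_1 > 0$ I may divide to obtain $B(\bkappa_1) \ge B(\bkappa_2)$. As $\beta_1 < \beta_2$ were arbitrary, this shows $B(\bkappa^*(\beta))$ is non-increasing, which is the claim. Specializing with $A = \mathbb{E}_{q_{\bphi}(\z|\x)}[\log p_{\btheta}(\x|\z)]$, $B = D_{KL}(q_{\bphi}(\z|\x)\|p(\z))$, and $\bkappa = (\btheta,\bphi)$ then recovers \eqref{eq:qp}, and combining with Proposition \ref{prop1} gives \eqref{eq:re}.

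There is essentially no hard step: the argument is elementary once one exploits the linearity of $O$ in $\beta$. The only point requiring care is the existence of a maximizer for each $\beta$ so that $\bkappa^*(\beta)$ is well defined; if the maximizer is non-unique, the same two inequalities hold for any selection, so monotonicity of $B(\bkappa^*(\beta))$ persists along any choice of maximizers. No continuity or differentiability of $A$, $B$, or the map $\bkappa^*(\cdot)$ is needed, which is why I prefer this combinatorial comparison over an envelope/convexity route. That alternative would instead observe that $O^*(\beta) = \max_{\bkappa} O(\bkappa;\beta)$ is a pointwise maximum of affine functions of $\beta$, hence convex, with $dO^*/d\beta = -B(\bkappa^*(\beta))$ non-decreasing; it reaches the same conclusion but only under stronger regularity, so I would relegate it to a remark rather than use it as the main proof.
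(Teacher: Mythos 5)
Your proof is correct and is essentially the paper's argument in direct form: both rest on the same two revealed-preference inequalities (each $\bkappa_i$ beats the other at its own $\beta_i$) together with the affine dependence of $O$ on $\beta$; the paper merely packages the identical algebra as a proof by contradiction (assume $B(\bkappa_2) > B(\bkappa_1)$, then $\bkappa_2$ would beat $\bkappa_1$ at $\beta_1$), whereas you add the two inequalities so the $A$-terms cancel. The mechanism and hypotheses are the same, so this is the same proof in a slightly cleaner presentation.
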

	
	\begin{proof}
		The proof uses contradiction. Let $\beta_2 > \beta_1$ and 
		\begin{align} \label{kappa}
		\bkappa_1 \equiv \bkappa^*(\beta_1),\qquad  \bkappa_2 \equiv \bkappa^*(\beta_2).
		\end{align}
		Then 
		\begin{align}\label{Ok}
		O(\bkappa_1,\beta_1) &= O(\bkappa_1,\beta_2) + (\beta_2-\beta_1)B(\bkappa_1) \nonumber \\ 
		& \leq O(\bkappa_2,\beta_2) + (\beta_2-\beta_1)B(\bkappa_1),
		\end{align}
		where the first line is an identity, and the second line follows from the optimality of $\bkappa_2$ at $\beta=\beta_2$.
		
		Now we assume $B(\bkappa_2) > B(\bkappa_1)$, and see that this leads to a contradiction.
		\begin{align}
		&O(\bkappa_2,\beta_2) + (\beta_2-\beta_1)B(\bkappa_1)  \nonumber \\
		&\qquad <  O(\bkappa_2,\beta_2) + (\beta_2-\beta_1)B(\bkappa_2)  = O(\bkappa_2,\beta_1).
		\end{align}     
		The inequality follows from our assumption, and the equality from \eqref{Odef}. Combined with \eqref{Ok}, this implies
		\begin{align}
		O(\bkappa_1,\beta_1) < O(\bkappa_2,\beta_1)
		\end{align}
		which contradicts \eqref{kappa}. Therefore if $\beta_2>\beta_1$, then $B(\kappa_2) \leq  B(\kappa_1)$.
	\end{proof}

	\section{Details of the analytically tractable $\beta$-VAE model}

	\subsection{Integrating out data from the objective}
	\label{app:intoutdata}
	The full $\beta$-VAE objective is \eqref{bVAEobj} averaged with respect to the data distribution $p(\x)$:
	\begin{align}
	& L(\bm{\theta},\bm{\phi};\beta) \equiv \mathbb{E}_{p(\x)}\left[ \mathcal{L}(\bm{\theta},\bm{\phi};\beta)\right]  \nonumber \\
	&=\mathbb{E}_{p(\x)} \left[\mathbb{E}_{q_{\bphi}(\z | \x)}\left[\log p_{\btheta}(\x | \z)\right]-\beta D_{K L}\left(q_{\bphi}(\z | \x) \| p(\z)\right) \right].
	\end{align}
	We first calculate $\mathbb{E}_{q_{\bphi}(\z | \x)}\left[\log p_{\btheta}(\x | \z)\right]$. We use the reparametrization trick: For $\z  \sim q_{\bphi}(\z | \x) = \N(\bmu_{\z},\bSigma_{\z})$, we can write $\z = \bmu_{\z} + \bSigma_{\z}^{1/2} \bm{\epsilon}$ with $\bm{\epsilon}\sim \N(\mathbf{0},\mathbf{I}_k)$. Then,
	\begin{align*}
	&\mathbb{E}_{\z \sim q_{\bphi}(\mathbf{z} | \mathbf{x})} [ \log p_{\btheta}(\mathbf{x} | \mathbf{z})] \\
	&\quad=\mathbb{E}_{\z \sim q_{\bphi}(\mathbf{z} | \mathbf{x})} \bigg[ \log \mathcal{N}(\mathbf{x}; \mathbf{D}\z + \mathbf{b}^{D}, \mathbf{I}_N) \bigg] \\ 
	&\quad= \mathbb{E}_{\bm{\epsilon} \sim \mathcal{N}(0,1)} \bigg[\log \mathcal{N}(\mathbf{x}; \mathbf{D}(\bmu_{\z} +  \bSigma_{\z}^{1/2} \bm{\epsilon}) + \mathbf{b}^{D},  \mathbf{I}_N ) \bigg] \\
	&\quad= -\frac{N}{2}\log(2\pi) - \frac{1}{2}(\D \bmu_{\z} + \mathbf{b}^{D} - \mathbf{x})^2 \\
	&\qquad\quad- \frac{1}{2} \mathbb{E}_{\bm{\epsilon} \sim \mathcal{N}(0,1)}\bigg[\bm{\epsilon}^{\top} (\D \bSigma_{\z}^{1/2})^{\top} (\D \bSigma_{\z}^{1/2})\bm{\epsilon} \bigg].
	\end{align*}
	The last term can be calculated by the following useful trick. Let's introduce a source term $\J$ into the generating functional,
	\begin{align}
	Z[\J] = \int \frac{d\z}{(2\pi)^{n/2} \sqrt{\det \bSigma_{\z}}} \exp \bigg(-\frac{1}{2}\z^{\top} \bSigma_{\z}^{-1}\z + \J^{\top} \A \z \bigg),
	\end{align}
	then differentiating with respect to the source, 
	\begin{align}
	\bigg(\frac{\delta}{\delta \J} \bigg)^{\top} \bigg(\frac{\delta}{\delta \J} \bigg) Z[\J] \bigg\vert_{\J=0} =     \mathbb{E}_{\z \sim \mathcal{N}(0,\bSigma_{\z})}\left[\z^{\top} \A^{\top} \A \z \right] 
	\end{align}
	On the other hand, we can perform the Gaussian integral in $Z[\J]$ to obtain,
	\begin{align}
	Z[\J] = \exp \bigg\{\frac{1}{2}(\J\A)^{\top} \bSigma_{\z} (\J\A)  \bigg\}.
	\end{align}
	Then we arrive at 
	\begin{align}
	\label{eqn:correlation}
	\mathbb{E}_{\z \sim \mathcal{N}(0,\bSigma_{\z})}\left[\z^{\top} \A^{\top} \A \z \right] = \text{Tr} (\A \bSigma_{\z} \A^{\top})
	\end{align}
	Eq. \eqref{eqn:correlation} is central to the calculations of many results presented in the text.
	
	Going back to the reconstruction objective, using eq. \eqref{eqn:correlation} we have (up to constants)
	\begin{align}
	\mathbb{E}_{\z \sim q_{\bphi}(\mathbf{z} | \mathbf{x})} [ \log p_{\btheta}(\mathbf{x} | \mathbf{z})] &= - (\D \bmu_{\z} + \mathbf{b}^{D} - \mathbf{x})^2 \nonumber \\
	&\qquad\qquad- \text{Tr}(\D^{\top}\D \bSigma_{\z}).
	\end{align}{}
	Similarly we can calculate the conditional independence loss,
	\begin{align}
	&D_{K L}(q_{\bphi}(\mathbf{z} | \mathbf{x}) \| p(\mathbf{z})) \nonumber \\
	&\qquad\qquad = -\frac{1}{2}\left(k + \text{Tr}\log \bSigma_{\z} - \bmu_{\z}^{\top} \bmu_{\z} - \text{Tr} \bSigma_{\z} \right).
	\end{align}
	Putting everything together, the objective function we want to maximize is (neglecting constant terms)
	\begin{align}
	&L(\bm{\theta},\bm{\phi};\beta) =  \frac{1}{2}\mathbb{E}_{p(\mathbf{x})}\left[ - (\D \bmu_{\z} + \mathbf{b}^{D} - \mathbf{x})^{\top}(\D \bmu_{\z} + \mathbf{b}^{D} - \mathbf{x})\right. \nonumber \\
	&\quad - \beta \bmu_{\z}^{\top}\bmu_{\z} - \text{Tr}(\D^{\top}\D \bSigma_{\z}) + \beta \text{Tr}\log \bSigma_{\z} - \beta\text{Tr}\bSigma_{\z} ].
	\end{align}
	The expectation with respect to $\x$ amounts to performing Gaussian integrals in $\x$, as $\x \sim \N(\mathbf{0},\bSigma_{\x})$, and thus can be done exactly. After plugging in the definition of $\bmu_{\z}, \bSigma_{\z}$ from eq. \eqref{eqn:musigma}, and performing the $\x$ integrals, the result is given in eq. \eqref{eqn:objective_nox}.

	\subsection{Taking derivatives of the objective}
	\label{app:derivatives}
	In order to take derivatives of eq. \eqref{eqn:objective_nox}, we unpack the indices (to ease the notation, we denote $\bSigma_{\x}$ as $\bSigma$, and follow the Einstein summation convention, repeated indices are to be summed over unless the summation is explicitly specified)
	\begin{align}
	L = &-\frac{1}{2} \bigg\{ (D_{ij} W^{\mu}_{jk} - \delta_{ik}) \Sigma_{kl} (W^{\mu}_{ml} D_{im} - \delta_{il}) - \beta \sum_i b^{\sigma}_i \nonumber \\
	&+ \beta (W^{\mu})_{ij} \Sigma_{jk} W^{\mu}_{ik} +(D_{ij} b^{\mu}_j + b^D_i)^2 + \beta (b^{\mu}_i)^2 \nonumber \\
	&+ \sum_i \bigg( D^2_{li} + \beta \bigg) \exp \bigg(\frac{1}{2} W^{\sigma}_{ij} \Sigma_{jk} W^{\sigma}_{ik} + b_i^{\sigma} \bigg)  \bigg\}
	\end{align}
	Then,
	\begin{align}
	0 &= \frac{\partial L}{\partial W^{\mu}_{ab}} = \left [ \left( \D^{\top} (\D \W^{\mu} - \mathbf{I}_N) + \beta \W^{\mu} \right) \bSigma \right]_{ab}, \\
	0 &= \frac{\partial L}{\partial b^{\mu}_{a}} = \left[ (\D \bb^{\mu} + \bb^D) \D + \beta \bb^{\mu} \right]_a, \\
	0 & = \frac{\partial L}{\partial D_{ab}} = \left[ (\D \W^{\mu} - \I_N) \bSigma (\W^{\mu})^{\top} \right]_{ab} \nonumber \\
	& \qquad \qquad  + \left[\D \bb^{\mu} + \bb^D\right]_a b^{\mu}_b + D_{ab} e^{ \frac{1}{2}\left[\W^{\sigma}\bSigma (\W^{\sigma})^{\top}\right]_{bb} + b^{\sigma}_b }, \\
	0 &= \frac{\partial L}{\partial b^{D}_{a}} = \left[\D \bb^{\mu} + \bb^D\right]_a \\
	0 &= \frac{\partial L}{\partial W^{\sigma}_{ab}} = \left(\left[\D^{\top}\D\right]_{aa} + \beta \right) e^{ \frac{1}{2}\left[\W^{\sigma}\bSigma\right]_{ab}W^{\sigma}_{ab} + b^{\sigma}_a } \left[\W^{\sigma}\bSigma\right]_{ab}, \\
	0 &= \frac{\partial L}{\partial b^{\sigma}_{a}} = \left( \left[\D^{\top}\D\right]_{aa} + \beta \right) e^ { \frac{1}{2}\left[\W^{\sigma}\C (\W^{\sigma})^{\top}\right]_{aa} + b^{\sigma}_a } - \beta.
	\end{align}
	From the $b^{\mu}_a$ and $b^D_a$ equations we can immediately see $\bb^{\mu} = \bb^D = 0$.  
	
	\subsection{Derivation of the ground-truth posterior}
	\label{app:gtposterior}
	We observe that since both $\s$ and $\bm{\eta}$ are independently normally distributed in \eqref{source}, $\s$ and $\bm{\eta}$ are jointly normal, i.e., $p(\s,\bm{\eta})$ is a normal distribution. However, note that $p(\s,\bm{\eta})$ is just $p(\s,\x)$ up to a coordinate transformation, so $p(\s,\x)$ is also normal. Also, as $\s \in \mathbb{R}^{k}$, $\x \in \mathbb{R}^N$, $(\s, \x) \in \mathbb{R}^{N+k}$. We can think of $\s$ and $\x$ partition a $(N+k)$-dimensional normal distribution $p((\s,\x))$. Therefore, to find the conditional probability $p_{\text{g-t}}(\s |\x)$, we can just use the formula for conditioning multivariate normal distribution: 
	\begin{equation}
	p_{\text{g-t}}(\s |\x) = \mathcal{N}(\bmu_{\s|\x}, \bSigma_{\s|\x}), 
	\end{equation}
	where 
	\begin{equation}
	\begin{split}
	\bmu_{\s|\x} &= \bmu_{\s} + \text{Cov}(\x,\s)^{\top}(\bSigma_{\x})^{-1}(\x - \bmu_{\x}) \\
	\bSigma_{\s|\x} &= \bSigma_{\s} - \text{Cov}(\x,\s)^{\top}(\bSigma_{\x})^{-1}\text{Cov}(\x,\s).
	\end{split}{}
	\end{equation}{}
	Now specializing to our case \eqref{source}, 
	\begin{equation}
	\text{Cov}(\x,\s) = \text{Cov}(\A \s + \bm{\eta}, \s)= \A \bSigma_{\s} + \text{Cov}(\s,\bm{\eta})= \A.
	\end{equation}
	Note that $\bSigma_{\x} = \A \A^{\top} + \mathbf{I}_N$, then 
	\begin{equation}
	\bmu_{\s|\x} = \A^{\top}(\A \A^{\top} + \mathbf{I}_N)^{-1}\x = (\A^{\top}\A + \mathbf{I}_k)^{-1}\A^{\top}\x,
	\end{equation}
	where in the second equality we have used the \textit{matrix push-through identity}:
	For any matrices $\mathbf{U} \in \mathbb{R}^{N\times k}$,$\mathbf{V} \in \mathbb{R}^{k \times N}$,
	\begin{equation}
	\label{eqn:push}
	(\mathbf{I}_N + \mathbf{U}\mathbf{V})^{-1}\mathbf{U} = \mathbf{U}(\mathbf{I}_k + \mathbf{V}\mathbf{U})^{-1}.
	\end{equation}
	Now the covariance,
	\begin{equation}
	\begin{aligned}
	\bSigma_{\s|\x} &= \mathbf{I}_k - \A^{\top}(\A \A^{\top} + \mathbf{I}_N)^{-1} \A \\
	&= \mathbf{I}_k - \A^{\top}\A (\A^{\top} \A + \mathbf{I}_k)^{-1} \\
	&= \mathbf{I}_k - \A^{\top}\A [(\A^{\top}\A)^{-1} - (\A^{\top}\A )^{-1}(\A^{\top}\A + \mathbf{I}_k)^{-1}] \\
	&= (\A^{\top}\A + \mathbf{I}_k)^{-1},
	\end{aligned}{}
	\end{equation}
	where in the third equality we have used the \textit{Woodbury matrix identity}: For any invertible matrix $\B \in \mathbb{R}^{N \times N}$ and size compatible matrices $\mathbf{U} \in \mathbb{R}^{N\times k}$ and $\mathbf{V} \in \mathbb{R}^{k \times N}$:
	\begin{equation}
	\label{eqn:woodbury}
	(\B+\U\V)^{-1} = \B^{-1} - \B^{-1}\U(\mathbf{I}_k+\V\B^{-1}\U)^{-1}\V\B^{-1}.
	\end{equation}{}

	\subsection{Derivation of the model posterior}
	\label{app:modelposterior}
	Our goal is to use the Bayes rule to calculate the model posterior, $p_{\btheta}(\x|\z) = p_{\btheta}(\x|\z)p(\z)/p_{\btheta}(\x).$
	
	In order to do so, we first need to calculate the \textit{evidence} $p_{\btheta}(\x)$,
	\begin{align}
	p_{\btheta}(\x) &= \int_{\mathbb{R}^{k}} d\z p_{\btheta}(\x|\z) p(\z) \\
	&= \int_{\mathbb{R}^{k}} d\z \N (\D \z, \mathbf{I}_N)\N(\mathbf{0},\mathbf{I}_k) \\
	&= \N(\mathbf{0},(\D\D^{\top}+\mathbf{I}_N)),
	\end{align}
	where in the third equality we have used eq.s \eqref{eqn:push} and \eqref{eqn:woodbury} to simplify. 
	Therefore, 
	\begin{equation}
	p_{\btheta}(\x|\z) = \frac{\N (\D \z, \mathbf{I}_N)\N(\mathbf{0},\mathbf{I}_k)}{\N(\mathbf{0},(\D\D^{\top}+\mathbf{I}_N))}
	\end{equation}
	After some simplifications  using eq.s \eqref{eqn:push} and \eqref{eqn:woodbury}, we arrived at
	\begin{align}
	p_{\btheta}(\x|\z) &= \N((\D\D^{\top}+\mathbf{I}_N)^{-1}\D^{\top}, (\D\D^{\top}+\mathbf{I}_N)^{-1}) \nonumber \\
	&\equiv \mathcal{N}(\bmu_{\z|\x}, \bSigma_{\z|\x}).
	\end{align}
	\subsection{Derivation of $\rm{MIE/TIE}$}
	First let's consider $\rm{MIE}$. Let
	\begin{align}
	\bmu_{\z|\x} \equiv \F \x, \qquad 
	\bSigma_{\z|\x} \equiv \E.
	\end{align}{}
	Then, we can write $\rm{MIE}$ as 
	\begin{align}
	\rm{MIE} =& \mathbb{E}_{p(\x)}\big[ D_{K L}(q_{\bphi}(\mathbf{z} | \mathbf{x}) \| p_{\btheta}(\x|\z)) \big] \nonumber \\
	= &\frac{1}{2}\mathbb{E}_{p(\x)}\bigg[(\F\x - \bmu_{\z})^{\top}\E^{-1}(\F\x - \bmu_{\z}) \nonumber \\
	& + \text{Tr}(\E^{-1}\bSigma_{\z}) - \log \bigg(\frac{\det\bSigma_{\z}}{\det\E} \bigg) - k \bigg].
	\end{align}
	Plugging in eq. \eqref{eqn:musigma} and performing the $\x$ Gaussian integrals as in Appendix \ref{app:intoutdata}, we arrive at eq. \eqref{eqn:mie/tie}.
	
	Note that at network optimum, our model posterior $p_{\btheta}(\z|\x)$ equals to the ground-truth posterior $p(\s|\x)$ upon changing $\D$ to $\A$. Therefore, we just need to replace $\D$ by $\A$ in the above derivation to obtain the results for $\rm{TIE}$.

\section{Simulation Details}\label{app_sims}
The deep neural network models used for the numerical experiments  task used the same overall architecture. The encoder is a feed forward network with 3 hidden layers, with 256, 200, and 200 units. 2 parallel hidden layers with 2 neurons parameters the mean and variance for $k=2$ latent variables. The decoder consists of 3 feed-forward hidden layers with 200, 200, and 256 units, then outputs the reconstructed image. 
The network was trained for 1000 epochs over the entire synthetic dataset, comprising of 1000 examples. We used a tanh activation function used along with Adam Optimization \cite{kingma2014adam} with a learning rate of 1e-3. Experiments were repeated across 300 realizations for each $\beta$ value. Results shown were averaged over the whole set of realizations. 

The Reconstruction Objective was calculated for each trained model through generating 1000 samples from the encoder, passing them to the decoder to approximately calculate  $\mathbb{E}_{q_{\bphi}(\z | \x)}\left[\log p_{\btheta}(\x | \z)\right]$,
and averaging over the data $\x$. The Conditional Independence Loss was calculated directly using the Tensorflow Distributions library's native KL Divergence method. The ELBO was calculated by numerically taking the difference of these two terms, and the $\beta$-VAE objective was an extension of this with the hyperparameter $\beta$ included. The Inference Error was calculated numerically using the modelled $\bmu_{\z}$ and $\bSigma_{\z}$ and estimating $p(\x)$ from mini-batches.

In Fig. \ref{fig:panel3}, we show results on another simulation consistent with our findings.


\begin{figure}
	\centering
	\includegraphics[width=9cm]{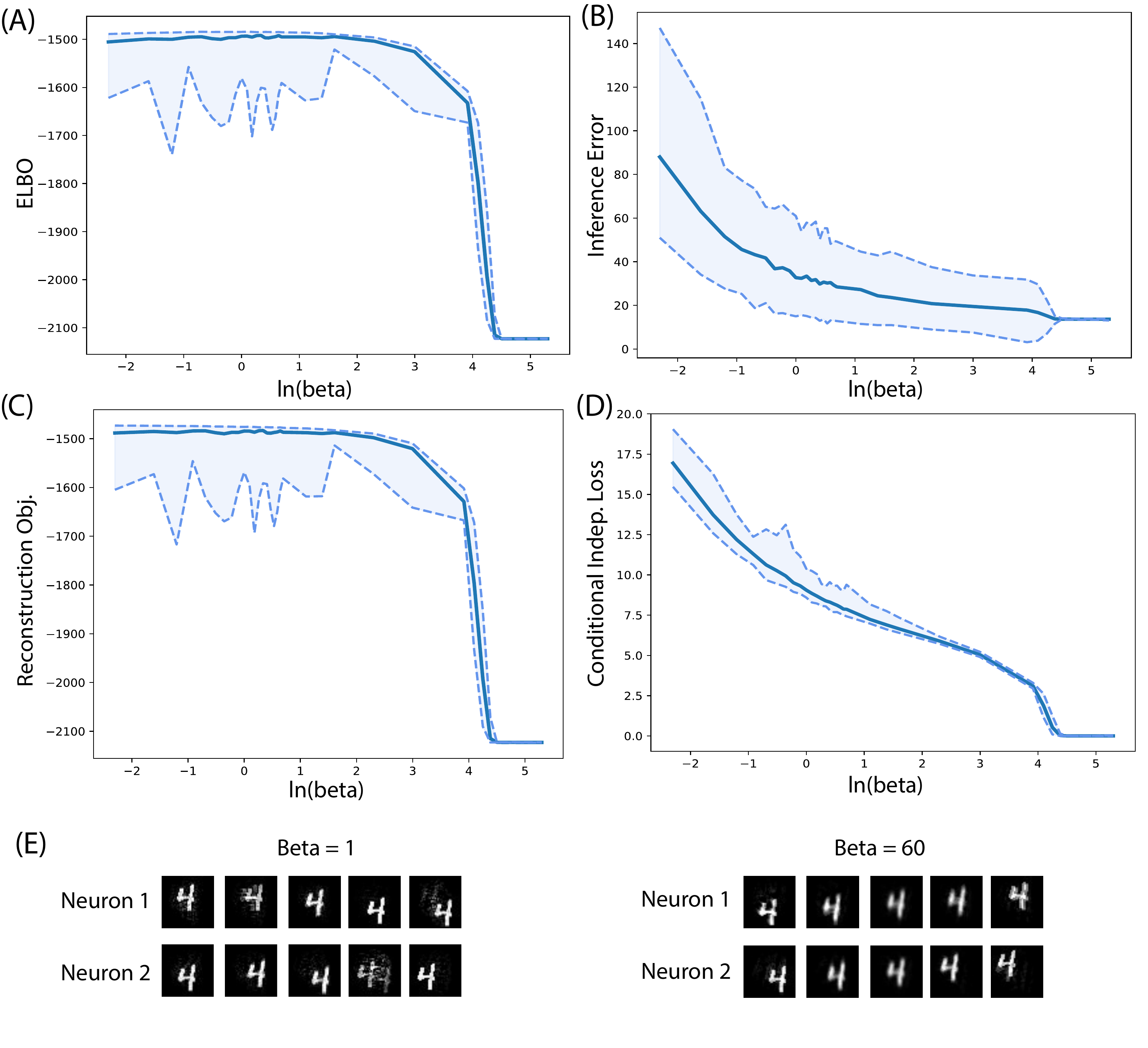}
	\caption{Values for error terms across 300 random initializations of the network for a synthetic dataset, which comprises of a single MNIST digit localized at different locations on a blank canvas. The cartesian coordinate of the digit in a sample from our data, $\x$, is determined by eq. \eqref{source}, with $A_{ij} = 2\delta_{ij} + 0.73$, $\s \sim \N(\bm{0},\I_k)$, $ \bm{\eta} \sim \N(\bm{0},\I_N)$,  $N=k=2$.  Dashed lines represent the minimum and maximum values, and solid line represents the average. (A) ELBO as a function of $\beta$. (B) TIE as a function of $\beta$. Its minima over various random initialization follow a non-monotonic trend. (C) Reconstruction objective as a function of $\beta$. (D) Conditional Independence Loss as a function of $\beta$.  (E) Traversal of latent encoding in bottleneck neurons for small and large $\beta$. One neuron is held fixed while the other is modulated to generate reconstructions. Reconstruction of the digits noticeable worsens with higher $\beta$, while units in the bottleneck encode for structured, orthogonal axes of motion.}
	\label{fig:panel3}
\end{figure}

\end{document}